\newtheorem{proposition}{Proposition}[section]
\newtheorem{lemma}{Lemma}[section]
\newtheorem{theorem}{Theorem}[section]
\theoremstyle{remark}
\newtheorem{example}{Example}[section]
\newcommand{\rank}{\text{rank}}
\newcommand{\FSe}{\text{FS}_\varepsilon}
\newcommand{\by}{\mathbf{y}}
\newcommand{\bX}{\mathbf{X}}
\newcommand{\be}{\mathbf{e}}
\newcommand{\sgn}{\text{sgn}}
\newcommand{\B}{\boldsymbol}
\DeclareMathOperator*{\argmin}{arg\,min}
\DeclareMathOperator*{\argmax}{arg\,max}
\title{AdaBoost and Forward Stagewise Regression are First-Order Convex Optimization Methods}
\author{Robert M. Freund\thanks{MIT Sloan School of Management, 77 Massachusetts Avenue, Cambridge, MA   02139
({mailto:  rfreund@mit.edu}).  This author's research is supported by AFOSR Grant No. FA9550-11-1-0141 and the MIT-Chile-Pontificia Universidad Católica de Chile Seed Fund.}
\and Paul Grigas\thanks{MIT Operations Research Center, 77 Massachusetts Avenue, Cambridge, MA   02139
({mailto:  pgrigas@mit.edu}).  This author's research has been partially supported through an NSF Graduate Research Fellowship and the MIT-Chile-Pontificia Universidad Católica de Chile Seed Fund.}
\and Rahul Mazumder\thanks{MIT Operations Research Center, 77 Massachusetts Avenue, Cambridge, MA   02139
({mailto:  rahulmaz@mit.edu})}
}
\date{June 29, 2013}
\begin{document}
\maketitle

\begin{abstract}
Boosting methods are highly popular and effective supervised learning methods which combine weak learners into a single accurate model with good statistical performance.  In this paper, we analyze two well-known boosting methods, AdaBoost and Incremental Forward Stagewise Regression ($\FSe$), by establishing their precise connections to the Mirror Descent algorithm, which is a first-order method in convex optimization.  As a consequence of these connections we obtain novel computational guarantees for these boosting methods.  In particular, we characterize convergence bounds of AdaBoost, related to both the margin and log-exponential loss function, for any step-size sequence. Furthermore, this paper presents, for the first time, precise computational complexity results for $\FSe$.
\end{abstract}

\section{Introduction}
Boosting is a widely popular and successful supervised learning method which combines weak learners in a greedy fashion
to deliver accurate statistical models.
For an overview of the boosting approach, see, for example, Freund and Schapire~\cite{freund99short} and Schapire~\cite{schapire2003approach,schapire2012boosting}.  Though boosting (and in particular AdaBoost~\cite{freund99short})
was originally developed in the context of classification problems,
it is much more widely applicable~\cite{Friedman00greedyfunction}.
An important application of the boosting methodology in the
context of linear regression leads to Incremental Forward Stagewise Regression ($\FSe$)~\cite{HastieFSe,ESLBook,LARS}.
In this paper, we establish the equivalence of two boosting algorithms, AdaBoost and $\FSe$, to specific realizations of the Mirror Descent algorithm, which is a first-order method in convex optimization.
Through exact interpretations of these well-known boosting algorithms as specific first-order methods, we
leverage our understanding of computational complexity for first-order methods to
derive new computational guarantees for these algorithms. Such understanding of algorithmic computational complexity is also helpful from a statistical learning perspective, since
it enables one to derive bounds on
the number of  base models that need to be combined to get a ``reasonable" fit to the data.

\subsection*{Related Work, and Contributions}
We briefly outline some of the main developments in the complexity analysis of AdaBoost and
 $\FSe$ that appear to be closely related to the topic of this paper.\medskip

\noindent \textbf{AdaBoost and Optimization Perspectives:}  There has been a lot of interesting work connecting AdaBoost and related boosting methods to specific optimization problems and in understanding the computational guarantees of these methods with respect to these optimization problems.  In particular, much of the work has focused on two problems: maximizing the margin and minimizing the exponential loss. Mason et al.~\cite{boostingGradient} develop a general framework of boosting methods that correspond to coordinate gradient descent methods to minimize arbitrary loss functions, of which AdaBoost is a particular case with the exponential loss function.  For the problem of minimizing the exponential loss, Mukherjee et al.~\cite{rateAdaBoost2011} give precise convergence rates for the version of AdaBoost with step-sizes determined by a line-search, see also Telgarsky \cite{NIPS2011_0913}.  Schapire et al.~\cite{boostMargin97} show that the margin is inherently linked to the generalization error of the models produced by AdaBoost, thus it is highly desirable to maximize the margin in order to build predictive models. Several variants of AdaBoost have been developed specifically with this goal in mind, and these methods have been shown to converge to the maximum margin solution at appropriate rates (Ratsch and Warmuth~\cite{RatschWarmuthMargin2005}, Rudin et al.~\cite{RudinSmoothMargin2007}, Shalev-Shwartz and Singer~\cite{shaiWeakLinear2010}).\medskip

Under the assumption that the weak learner oracle returns the optimal base feature (also called weak hypothesis) for any distribution over the training data, we show herein that AdaBoost corresponds exactly to an instance of the Mirror Descent method \cite{NemirovskyYudin83, beckteb03mirror} for the primal/dual paired problem of edge minimization and margin maximization; the primal iterates $w^k$ are distributions over the examples and are attacking the edge minimization problem, and the dual iterates $\lambda^k$ are nonnegative combinations of classifiers that are attacking the maximum margin problem. In this minmax setting, the Mirror Descent method (and correspondingly AdaBoost) guarantees a certain bound on the duality gap $f(w^k) - p(\lambda^k)$ and hence on the optimality gap as a function of the step-size sequence, and for a simply chosen constant step-size the bound is $\sqrt{\frac{2\ln(m)}{k+1}}$.\medskip

In the case of separable data, we use a bound on the duality gap to directly infer a bound on the optimality gap for the problem of maximizing the margin.  We show precise rates of convergence for the optimal version of AdaBoost (without any modifications) with respect to the maximum margin problem for any given step-size rule. Our results seem apparently contradictory to Rudin et al.~\cite{rudinDynamics2004}, who show that even in the optimal case considered herein (where the weak learner always returns the best feature) AdaBoost may fail to converge to a maximum margin solution. However, in~\cite{rudinDynamics2004} their analysis is limited to the case where AdaBoost uses the originally prescribed step-size $\alpha_k := \frac{1}{2}\ln\left(\frac{1 + r_k}{1 - r_k}\right)$, where $r_k$ is the edge at iteration $k$, which can be interpreted as a line-search with respect to the exponential loss (not the margin) in the coordinate direction of the base feature chosen at iteration $k$ (see~\cite{boostingGradient} for a derivation of this). Our interpretation of AdaBoost in fact shows that the algorithm is structurally built to work on the maximum margin problem, and it is only the selection of the step-sizes that can cause convergence for this problem to fail.\medskip

In the case of non-separable data, a maximum margin solution is no longer informative; instead, we show that the edge $f(w^k)$ at iteration $k$ is exactly the $\ell_\infty$ norm of the gradient of the log-exponential loss evaluated at the current classifier, and we infer a bound on this norm through the bound on the duality gap. This bound quantifies the rate at which the classifiers produced by AdaBoost approach the first-order optimality condition for minimizing the log-exponential loss. Although precise objective function bounds on the optimality gap with respect to the exponential loss were given in \cite{rateAdaBoost2011}, their analysis is limited to the case of step-sizes determined by a line-search, as mentioned above. The step-sizes suggested by our Mirror Descent interpretation are quite different from those determined by a line-search, and furthermore although our bounds are specific to either the separable or non-separable case, the step-sizes we suggest do not depend on which case applies to a particular data set.\medskip

\noindent{\textbf{Forward Stagewise Regression and Optimization Perspectives:}}  The Incremental Forward Stagewise algorithm~\cite{HastieFSe, ESLBook,LARS} ($\FSe$) with shrinkage parameter $\varepsilon$ is a boosting algorithm for the linear regression problem that iteratively updates (by a small amount $\varepsilon$) the coefficient of the variable most correlated with the current residuals.
A principal reason behind why $\FSe$ is attractive from a statistical viewpoint is because of its ability to deliver \emph{regularized} solutions \eqref{sparsity1} by controlling the number of iterations $k$ along with the shrinkage parameter $\varepsilon$ with proper bias-variance tradeoff~\cite{ESLBook}.  The choice of the step-size plays an important role in the algorithm and has a bearing on the statistical properties of
the type of solutions produced.
For example, a step-size chosen by exact line-search on the least-squares loss function leads to the well known Forward Stagewise Algorithm---a greedy version of best subset selection~\cite{ESLBook}. \emph{Infinitesimal} Incremental Forward Stagewise Regression ($\text{FS}_0$, i.e., the limit of $\FSe$ as $\varepsilon \rightarrow 0+$) under some additional conditions on the data leads to a coefficient profile that is exactly the same as the LASSO solution path~\cite{HastieFSe,ESLBook}.
It is thus natural to ask what criterion might the $\FSe$ algorithm optimize?, and is it possible to
have computational complexity guarantees for
$\FSe$ --- and that can accommodate a flexible choice of steps-sizes?
To the best of our knowledge, a simple and complete answer to the above questions are heretofore unknown. In this paper, we answer these questions by showing that $\FSe$ is working towards minimizing the maximum correlation between the residuals and the predictors, which can also be interpreted as the $\ell_\infty$ norm of the gradient of the least-squares loss function. Our interpretation yields a precise bound on this quantity for any choice of the shrinkage parameter $\varepsilon$, in addition to the regularization/sparsity properties \eqref{sparsity1}.

\subsection{Notation}
For a vector $x \in \mathbb{R}^n$, $x_i$ denotes the $i^{\text{th}}$ coordinate; we use superscripts to index vectors in a sequence $\{x^k\}$. Let $e_j$ denote the $j^{\text{th}}$ unit vector in $\mathbb{R}^n$, $e = (1, \ldots, 1)$, and $\Delta_n = \{x \in \mathbb{R}^n : e^Tx = 1, x \geq 0\}$ is the $(n-1)$-dimensional unit simplex. Let $\|\cdot\|_q$ denote the $q$-norm for $q \in [1, \infty]$ with unit ball $B_q$, and let $\|v\|_0$ denote the number of non-zero coefficients of the vector $v$. For $A \in \mathbb{R}^{m \times n}$, let $\|A\|_{q_1, q_2} := \max\limits_{x : \|x\|_{q_1} \leq 1}\|Ax\|_{q_2}$ be the operator norm. For a given norm $\|\cdot\|$ on $\mathbb{R}^n$, $\|\cdot\|_\ast$ denotes the dual norm defined by $\|s\|_\ast = \max\limits_{x : \|x\| \leq 1}s^Tx$. Let $\partial f(\cdot)$ denote the subdifferential operator of a convex function $f(\cdot)$. The notation ``$\tilde v \leftarrow \argmax\limits_{v  \in S} \{f(v)\}$'' denotes assigning $\tilde v$ to be any optimal solution of the problem $\max\limits_{v  \in S} \{f(v)\}$.  For a convex set $P$ let $\Pi_P(\cdot)$ denote the Euclidean projection operator onto $P$, namely $\Pi_P(\bar x):=\argmin_{x \in P} \|x-\bar x\|_2$.

\section{Subgradient and Generalized Mirror Descent Methods: A Brief Review}\label{MirrorDescent}
Suppose we are interested in solving the following optimization problem:
\begin{equation}\label{poi1}
\mbox{(Primal):} \ \ \ \min\limits_{x \in P} \ f(x) \ ,
\end{equation}
where $P \subseteq \mathbb{R}^n$ is a closed convex set, $\mathbb{R}^n$ is considered with the given norm $\|\cdot\|$, and $f(\cdot) : P \to \mathbb{R}$ is a (possibly non-smooth) convex function.  Recall that $g$ is a subgradient of $f(\cdot)$ at $x$ if $f(y) \ge f(x) + g^T(y - x)$ for all $y \in P$, and we denote the set of subgradients of $f(\cdot)$ at $x$ by $\partial f(x)$.  We assume with no loss of generality that $\partial f(x) \ne \emptyset$ for all $x \in P$.  We presume that computation of a subgradient at $x \in P$ is not a burdensome task. Furthermore, we assume that $f(\cdot)$ has Lipschitz function values with Lipschitz constant $L_f$, i.e., we have $|f(x) - f(y)| \leq L_f\|x - y\|$ for all $x, y \in P$.\medskip

We are primarily interested in the case where $f(\cdot)$ is conveyed with minmax structure, namely:
\begin{equation}\label{eff}
f(x) := \max\limits_{\lambda \in Q} \ \phi(x, \lambda) \ ,
\end{equation}
where $Q \subseteq \mathbb{R}^m$ is a convex and compact set and $\phi(\cdot, \cdot)$ is a differentiable function that is convex in the first argument and concave in the second argument. In the case when $P$ is bounded, we define a dual function $p(\cdot) : Q \to \mathbb{R}$ by
\begin{equation}\label{ache}
p(\lambda) := \min\limits_{x \in P} \ \phi(x, \lambda) \ ,
\end{equation}
for which we may be interested in solving the dual problem:
\begin{equation}\label{poi2}
\mbox{(Dual):} \ \ \ \max\limits_{\lambda \in Q} \ p(\lambda) \ .
\end{equation}
Let $f^*$ denote the optimal value of \eqref{poi1}.  When $P$ is bounded let $p^*$ denote the optimal value of \eqref{poi2}, and the compactness of $P$ and $Q$ ensure that weak and strong duality hold:  $p(\lambda) \le p^* = f^* \leq f(x)$ for all $\lambda \in Q$ and $x \in P$.  The choice to call \eqref{poi1} the primal and \eqref{poi2} the dual is of course arbitrary, but this choice is relevant since the algorithms reviewed herein are \emph{not} symmetric in their treatment of the primal and dual computations.\medskip

The classical subgradient descent method for solving \eqref{poi1} determines the next iterate by taking a step $\alpha$ in the negative direction of a subgradient at the current point, and then projecting the resulting point back onto the set $P$.  If $x^k$ is the current iterate, subgradient descent proceeds by computing a subgradient $g^k \in \partial f(x^k)$, and determines the next iterate as $x^{k+1} \leftarrow \Pi_P (x^k -\alpha_k g^k)$, where $\alpha_k$ is the step-length, and $\Pi_P(\cdot)$ is the Euclidean projection onto the set $P$.\medskip

Note that in the case when $f(\cdot)$ has minmax structure \eqref{eff}, the ability to compute subgradients depends very much on the ability to solve the subproblem in the definition \eqref{eff}.  Indeed, \begin{equation}\label{subdiff} \mbox{if} \ \  \tilde{\lambda}^k \in \argmax_{\lambda \in Q} \phi(x^k, \lambda) \ \ , \ \mbox{then} \ \ g^k \leftarrow \nabla_x \phi(x^k, \tilde \lambda^k) \in \partial f(x^k) \ , \end{equation} that is, $g^k$ is a subgradient of $f(\cdot)$ at $x^k$.  This fact is very easy to derive, and is a special case of the more general result known as Danskin's Theorem, see \cite{bertsekas}.\medskip

In consideration of the computation of the subgradient \eqref{subdiff} for problems with minmax structure \eqref{eff}, the formal statement of the subgradient descent method is presented in Algorithm \ref{subgraddescent}.\medskip

\begin{algorithm}
\caption{Subgradient Descent Method (for problems with minmax structure)}\label{subgraddescent}
\begin{algorithmic}
\STATE Initialize at $x^0 \in P$, $k \leftarrow 0$\medskip

At iteration $k$:
\STATE 1. Compute:
\begin{description}
\item $\tilde{\lambda}^k \gets \argmax\limits_{\lambda \in Q} \ \phi(x^k, \lambda)$
\item $g^k \gets \nabla_x \phi(x^k, \tilde{\lambda}^k)$
\end{description}
\STATE 2. Choose $\alpha_k \geq 0$ and set:
\begin{description}
\item $x^{k+1} \gets \Pi_P(x^k - \alpha_k g^k)$
\end{description}
\end{algorithmic}
\end{algorithm}

The Mirror Descent method \cite{NemirovskyYudin83, beckteb03mirror} is a generalization of the subgradient descent method.  The Mirror Descent method requires the selection of a differentiable ``$1$-strongly convex'' function $d(\cdot) : P \to \mathbb{R}$ which is defined to be a function with the following (strong) convexity property:
\begin{equation*}
d(x) \geq d(y) + \nabla d(y)^T(x - y) + \frac{1}{2}\|x - y\|^2 \text{ for all }x, y \in P \ .
\end{equation*}
The function $d(\cdot)$ is typically called the ``prox function.''  The given prox function $d(\cdot)$ is also used to define a distance function:
\begin{equation}\label{bregman}
D(x,y) := d(x) - d(y) -\nabla d(y)^T(x - y) \geq \frac{1}{2}\|x - y\|^2 \text{ for all }x, y \in P \ .
\end{equation}
One can think of $D(x,y)$ as a not-necessarily-symmetric generalization of a distance metric (induced by a norm), in that $D(x,y) \ge \frac{1}{2}\|x-y\|^2 \ge 0$, $D(x,y) = 0$ if and only if $x=y$, but it is not generally true (nor is it useful) that $D(x,y) = D(y,x)$.  $D(x,y)$ is called the Bregman function or the Bregman distance.  With these objects in place, the Mirror Descent (proximal subgradient) method for solving (\ref{poi1}) is presented in Algorithm \ref{standardprox}.\medskip

\begin{algorithm}
\caption{Mirror Descent Method (applied to problems with minmax structure)}\label{standardprox}
\begin{algorithmic}
\STATE Initialize at $x^0 \in P$, $\lambda^0 = 0, k = 0$\medskip

At iteration $k$:
\STATE 1. Compute:
\begin{description}
\item $\tilde{\lambda}^k \gets \argmax\limits_{\lambda \in Q} \ \phi(x^k, \lambda)$
\item $g^k \gets \nabla_x \phi(x^k, \tilde{\lambda}^k)$
\end{description}
\STATE 2. Choose $\alpha_k \geq 0$ and set:
\begin{description}
\item $x^{k+1} \gets \arg\min\limits_{x \in P}\left\{\alpha_k (g^k)^Tx + D(x,x^k)\right\}$
\item $ \ $
\item $\lambda^{k+1} \gets \frac{\sum_{i = 0}^k\alpha_i\tilde{\lambda}^i}{\sum_{i = 0}^k\alpha_i}$
\end{description}
\end{algorithmic}
\end{algorithm}

The sequence $\{\lambda^k\}$ constructed in the last line of Step (2.) of Mirror Descent plays no role in the actual dynamics of Algorithm \ref{standardprox} and so could be ignored; however $\lambda^k$ is a feasible solution to the dual problem \eqref{poi2} and we will see that  the sequence $\{\lambda^k\}$ has precise computational guarantees with respect to problem \eqref{poi2}.  The construction of $x^{k+1}$ in Step (2.) of Mirror Descent involves the solution of an optimization subproblem; the prox function $d(\cdot)$ should be chosen so that this subproblem can be easily solved, i.e., in closed form or with a very efficient algorithm.\medskip

Note that the subgradient descent method described in Algorithm \ref{subgraddescent} is a special case of Mirror Descent using the ``Euclidean'' prox function $d(x):= \frac{1}{2}\|x\|_2^2$.  With this choice of prox function, Step (2.) of Algorithm \ref{standardprox} becomes:
\begin{equation*}
x^{k+1} \gets \arg\min\limits_{x \in P}\left\{\left(\alpha_k g^k - x^k\right)^Tx + \frac{1}{2}x^Tx \right\}= \Pi_P(x^k - \alpha_kg^k) \ ,
\end{equation*}
(since $D(x,x^k)=(-x^k)^Tx + \frac{1}{2}x^Tx + \frac{1}{2}\|x^k\|_2^2$), and is precisely the subgradient descent method with step-size sequence $\{\alpha_k\}$.  Indeed, the sequence $\{\alpha_k\}$ in the Mirror Descent method is called the ``step-size'' sequence in light of the analogy to subgradient descent. Below we present an example of a version of Mirror Descent with a prox function that is not Euclidean, which will be useful in the analysis of the algorithm AdaBoost.\medskip

\begin{example}{\bf{Multiplicative Weight Updates for Optimization on the Standard Simplex in $\mathbb{R}^n$}}\label{mwexample}\\
Consider optimization of $f(x)$ on $P = \Delta_n := \{x \in \mathbb{R}^n : e^Tx = 1, x \geq 0\}$, the standard simplex in $\mathbb{R}^n$, and let $d(x) = e(x) := \sum_{i=1}^n x_i\ln(x_i) + \ln(n)$ be the entropy function. It is well-known that $e(\cdot)$ is a $1$-strongly convex function on $\Delta_n$ with respect to the $\ell_1$ norm, see for example \cite{nest05smoothing} for a short proof.  Given any $c \in \mathbb{R}^n$, it is straightforward to verify that the optimal solution $\bar{x}$ of a problem of format $\min\limits_{x \in P}\left\{c^Tx + d(x)\right\}$ is given by:
\begin{equation}\label{entropysolution}
\bar{x}_i = \frac{\exp(-c_i)}{\sum_{l = 1}^n\exp(-c_l)} \ \ \ i = 1, \ldots, n \ .
\end{equation}
Using the entropy prox function, it follows that for each $i \in \{1, \ldots, n\}$, the update of $x^k$ in Step (2.) of Algorithm \ref{standardprox} assigns:
\begin{equation*}
x^{k+1}_i \propto \exp(-(\alpha_k g^k - \nabla e(x^k))_i) = \exp(1 + \ln(x^k_i) - \alpha_kg^k_i) \propto x^k_i \cdot \exp(-\alpha_k g^k_i) \ ,
\end{equation*}
which is an instance of the multiplicative weights update rule \cite{AroraHK12}.
\end{example}\medskip

We now state two well-known complexity bounds for the Mirror Descent method (Algorithm \ref{standardprox}), see for example \cite{beckteb03mirror}.  In the general case we present a bound on the optimality gap of the sequence $\{x^k\}$ for the primal problem \eqref{poi1} that applies for any step-size sequence $\{\alpha_k\}$, and in the case when $P$ is compact we present a similar bound on the duality gap of the sequences $\{x^k\}$ and $\{\lambda^k\}$.  Both bounds can be specified to $O\left(\frac{1}{\sqrt{k}}\right)$ rates for particularly chosen step-sizes.\medskip

\begin{theorem}{\bf{(Complexity of Mirror Descent}\cite{beckteb03mirror, polyak, nesterovBook})}\label{proxcomplexity}
Let $\{x^k\}$ and $\{\lambda^k\}$ be generated according to the Mirror Descent method (Algorithm \ref{standardprox}). Then for each $k \geq 0$ and for any $x \in P$, the following inequality holds:
\begin{equation}\label{mirror_bound1}
\min_{i \in \{0,\ldots,k\}} f(x^i) - f(x) \ \ \leq \ \  \frac{D(x,x^0) + \frac{1}{2}L_f^2\sum_{i = 0}^k\alpha_i^2}{\sum_{i=0}^k\alpha_i} \ .
\end{equation}
If $P$ is compact and $\bar{D} \geq \max\limits_{x \in P}D(x,x^0)$, then for each $k \geq 0$ the following inequality holds:
\begin{equation}\label{mirror_bound2}
\min_{i \in \{0,\ldots,k\}} f(x^i) -p(\lambda^{k+1})  \ \ \leq \ \ \frac{\bar{D}+ \frac{1}{2}L_f^2\sum_{i = 0}^k\alpha_i^2}{\sum_{i=0}^k\alpha_i} \ .
\end{equation}
\end{theorem}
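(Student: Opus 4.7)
The plan is to prove both bounds from a single per-iteration inequality, obtained by combining the Bregman three-point identity, the optimality condition at $x^{k+1}$, and strong convexity of the prox function $d(\cdot)$. Specifically, I would first establish the key lemma that for every $x \in P$,
\begin{equation*}
\alpha_i (g^i)^T(x^i - x) \ \leq \ D(x,x^i) - D(x,x^{i+1}) + \tfrac{1}{2}\alpha_i^2 \|g^i\|_\ast^2 \ .
\end{equation*}
The derivation splits $(g^i)^T(x^i - x) = (g^i)^T(x^i - x^{i+1}) + (g^i)^T(x^{i+1} - x)$. The optimality condition for the proximal subproblem defining $x^{i+1}$ gives $(\alpha_i g^i + \nabla d(x^{i+1}) - \nabla d(x^i))^T(x - x^{i+1}) \ge 0$, which combined with the three-point identity $D(x,x^i) - D(x,x^{i+1}) - D(x^{i+1},x^i) = (\nabla d(x^{i+1}) - \nabla d(x^i))^T(x - x^{i+1})$ bounds the second piece. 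The first piece is handled by Cauchy–Schwarz (in the $\|\cdot\|/\|\cdot\|_\ast$ pairing) together with the strong convexity bound $D(x^{i+1},x^i) \ge \tfrac{1}{2}\|x^{i+1}-x^i\|^2$ and the elementary inequality $ab - \tfrac{1}{2}b^2 \le \tfrac{1}{2}a^2$.

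Next I would convert gradients into function-value information via the minmax structure. Because $\tilde\lambda^i$ attains the max in \eqref{eff} at $x^i$, we have $f(x^i) = \phi(x^i, \tilde\lambda^i)$, and convexity of $\phi(\cdot,\tilde\lambda^i)$ yields
\begin{equation*}
(g^i)^T(x^i - x) \ \ge \ \phi(x^i,\tilde\lambda^i) - \phi(x,\tilde\lambda^i) \ = \ f(x^i) - \phi(x,\tilde\lambda^i) \ .
\end{equation*}
Multiplying by $\alpha_i$, summing $i=0,\ldots,k$, telescoping the Bregman terms (using $D(x,x^{k+1}) \ge 0$), and applying $\|g^i\|_\ast \le L_f$ (a standard consequence of the Lipschitz assumption on $f$) gives
\begin{equation*}
\sum_{i=0}^k \alpha_i \bigl( f(x^i) - \phi(x,\tilde\lambda^i) \bigr) \ \le \ D(x,x^0) + \tfrac{1}{2} L_f^2 \sum_{i=0}^k \alpha_i^2 \ .
\end{equation*}

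To finish, I lower-bound the left side in two ways. For any $x \in P$, concavity of $\phi(x,\cdot)$ and the definition $\lambda^{k+1} = \frac{\sum_i \alpha_i \tilde\lambda^i}{\sum_i \alpha_i}$ (Jensen's inequality) give $\sum_i \alpha_i \phi(x,\tilde\lambda^i) \le (\sum_i \alpha_i)\, \phi(x,\lambda^{k+1})$, and obviously $\sum_i \alpha_i f(x^i) \ge (\sum_i \alpha_i) \min_{i} f(x^i)$. Dividing through by $\sum_i \alpha_i$ produces the master inequality
\begin{equation*}
\min_{i \in \{0,\ldots,k\}} f(x^i) - \phi(x,\lambda^{{k+1}}) \ \le \ \frac{D(x,x^0) + \tfrac{1}{2} L_f^2 \sum_{i=0}^k \alpha_i^2}{\sum_{i=0}^k \alpha_i} \ .
\end{equation*}
The first bound \eqref{mirror_bound1} follows by using $\phi(x,\lambda^{k+1}) \le f(x)$ (from the definition \eqref{eff}). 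The second bound \eqref{mirror_bound2} follows by taking $x \in \arg\min_{x \in P} \phi(x,\lambda^{k+1})$, which is attained since $P$ is compact, so that $\phi(x,\lambda^{k+1}) = p(\lambda^{k+1})$, and then replacing $D(x,x^0)$ by its uniform upper bound $\bar D$.

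I expect the main obstacle to be the careful handling of the per-step inequality: correctly invoking the optimality condition for the non-Euclidean prox subproblem, applying the three-point identity in the right direction, and tightly combining the inner product/strong-convexity terms via Young's inequality. Once that single inequality is in hand, the rest is telescoping, Jensen, and bookkeeping.
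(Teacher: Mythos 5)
Your proposal is correct and complete: the per-iteration inequality via the prox-subproblem optimality condition, the three-point identity, strong convexity plus Young's inequality, followed by the convexity/concavity bounds $f(x^i)-\phi(x,\tilde\lambda^i) \le (g^i)^T(x^i-x)$ and Jensen's inequality for $\lambda^{k+1}$, telescoping, and the two specializations $\phi(x,\lambda^{k+1})\le f(x)$ and $x\in\argmin_{x\in P}\phi(x,\lambda^{k+1})$, is exactly the standard argument. The paper itself does not prove Theorem \ref{proxcomplexity} but cites \cite{beckteb03mirror, polyak, nesterovBook}, and your derivation is the one found there, so there is nothing to reconcile.
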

These bounds are quite general; one can deduce specific bounds, for example, by specifying a step-size sequence $\{\alpha^k\}$, a prox function $d(\cdot)$, a value of $x$ in \eqref{mirror_bound1} such as $x=x^*$, etc., see Propositions \ref{prop1} and \ref{prop2} where these specifications are illustrated in the case when $P$ is compact, for example.

\begin{proposition}\label{prop1} Suppose we {\em a priori} fix the number of iterations $k$ of Algorithm \ref{standardprox} and use a constant step-size sequence:
\begin{equation}\label{constant-rule}\alpha_i = \bar \alpha = \frac{1}{L_f}\sqrt{\frac{2\bar{D}}{k+1}} \  \ \ \ \ \mathrm{for} \ \ i =0, \ldots, k \ . \end{equation} Then
\begin{equation}\label{opt-alpha-bound}
\min_{i \in \{0,\ldots,k\}}f(x^i) - p(\lambda^{k+1}) \ \  \leq \ \ L_f\sqrt{\frac{2\bar{D}}{k+1}} \ .
\end{equation}\end{proposition}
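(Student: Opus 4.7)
The proof plan is essentially a one-shot substitution into the second bound of Theorem \ref{proxcomplexity}, so I will focus on laying out the bookkeeping cleanly and explaining why the step-size \eqref{constant-rule} is the ``right'' choice.

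First I would invoke inequality \eqref{mirror_bound2} directly, which tells us that for any step-size sequence $\{\alpha_i\}$,
\begin{equation*}
\min_{i \in \{0,\ldots,k\}} f(x^i) - p(\lambda^{k+1}) \;\le\; \frac{\bar D + \tfrac{1}{2} L_f^2 \sum_{i=0}^{k} \alpha_i^2}{\sum_{i=0}^{k} \alpha_i}.
\end{equation*}
For the constant sequence $\alpha_i = \bar\alpha$, the two elementary identities $\sum_{i=0}^{k} \alpha_i = (k+1)\bar\alpha$ and $\sum_{i=0}^{k} \alpha_i^2 = (k+1)\bar\alpha^2$ reduce the right-hand side to
\begin{equation*}
\frac{\bar D}{(k+1)\bar\alpha} + \tfrac{1}{2} L_f^2 \bar\alpha.
\end{equation*}

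Next I would substitute the specific value $\bar\alpha = \frac{1}{L_f}\sqrt{\frac{2\bar D}{k+1}}$. A quick computation shows that the first term equals $\tfrac{1}{2}L_f\sqrt{\frac{2\bar D}{k+1}}$ and the second term also equals $\tfrac{1}{2}L_f\sqrt{\frac{2\bar D}{k+1}}$, so their sum is exactly $L_f\sqrt{\frac{2\bar D}{k+1}}$, which is the advertised bound \eqref{opt-alpha-bound}.

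There is really no obstacle here; the proof is a direct plug-in. The only conceptual point worth mentioning is a motivational remark: the two terms $\frac{\bar D}{(k+1)\bar\alpha}$ and $\tfrac{1}{2}L_f^2 \bar\alpha$ trade off in opposite directions as $\bar\alpha$ varies, and minimizing their sum over $\bar\alpha>0$ (via the AM--GM inequality, or by setting the derivative to zero) yields precisely the step-size \eqref{constant-rule} and the matching objective value $L_f\sqrt{\frac{2\bar D}{k+1}}$. This explains, a posteriori, the peculiar-looking constants in \eqref{constant-rule}: they are chosen to balance the ``initialization error'' term $\bar D$ against the ``noise accumulation'' term involving $L_f^2$.
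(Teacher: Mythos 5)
Your proposal is correct and follows exactly the paper's own argument: substitute the constant step-size \eqref{constant-rule} into \eqref{mirror_bound2} and simplify, with both resulting terms equal to $\tfrac{1}{2}L_f\sqrt{2\bar D/(k+1)}$. The AM--GM remark explaining why this step-size is optimal is a nice addition but not needed for the proof itself.
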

\begin{proof} This follows immediately from \eqref{mirror_bound2} by substituting in \eqref{constant-rule} and rearranging terms.
\end{proof}
Indeed, the bound \eqref{opt-alpha-bound} is in fact the best possible bound for a generic subgradient method, see~\cite{NemirovskyYudin83}.

\begin{proposition}\label{prop2} Suppose we use the dynamic step-size sequence:
\begin{equation}\label{dynamic-rule}\alpha_i := \frac{1}{L_f}\sqrt{\frac{2\bar{D}}{i+1}} \ \ \ \ \  \mathrm{for} \ \ i \geq 0 \ . \end{equation} Then after $k$ iterations the following holds:
\begin{equation}\label{dynamic-bound}
\min_{i \in \{0,\ldots,k\}}f(x^i) - p(\lambda^{k+1}) \ \ \leq \ \  \frac{L_f\sqrt{\frac{1}{2}\bar{D}}\left(2 + \ln(k+1)\right)}{2(\sqrt{k+2} - 1)} \ = \ O\left(\frac{L_f \sqrt{\bar D}\ln(k)}{\sqrt{k}}\right)\ .
\end{equation}\end{proposition}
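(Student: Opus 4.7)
The plan is to derive the bound by plugging the dynamic step-size rule \eqref{dynamic-rule} into the general Mirror Descent bound \eqref{mirror_bound2} from Theorem \ref{proxcomplexity} and carefully estimating the resulting sums. The key quantities to bound are $\sum_{i=0}^k \alpha_i^2$ (in the numerator) and $\sum_{i=0}^k \alpha_i$ (in the denominator); both are sums of reciprocal powers of $i+1$, so standard integral comparisons will do the job.

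First, I would compute $\alpha_i^2 = \frac{2\bar D}{L_f^2(i+1)}$, so that $\tfrac{1}{2}L_f^2\alpha_i^2 = \frac{\bar D}{i+1}$ and therefore
\begin{equation*}
\tfrac{1}{2}L_f^2 \sum_{i=0}^k \alpha_i^2 \;=\; \bar D \sum_{j=1}^{k+1} \frac{1}{j} \;\le\; \bar D\bigl(1+\ln(k+1)\bigr),
\end{equation*}
using the standard harmonic-sum bound $\sum_{j=1}^{N} \tfrac{1}{j} \le 1 + \ln(N)$. Adding the $\bar D$ term already present in the numerator of \eqref{mirror_bound2} yields an upper bound of $\bar D(2+\ln(k+1))$.

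Next, for the denominator I would write $\sum_{i=0}^k \alpha_i = \frac{\sqrt{2\bar D}}{L_f}\sum_{j=1}^{k+1} \frac{1}{\sqrt{j}}$ and invoke the integral lower bound
\begin{equation*}
\sum_{j=1}^{k+1} \frac{1}{\sqrt{j}} \;\ge\; \int_{1}^{k+2} \frac{dx}{\sqrt{x}} \;=\; 2\bigl(\sqrt{k+2}-1\bigr),
\end{equation*}
which is the cleanest way to get an expression matching the stated bound. Substituting both estimates into \eqref{mirror_bound2} gives
\begin{equation*}
\min_{i\in\{0,\ldots,k\}}f(x^i) - p(\lambda^{k+1}) \;\le\; \frac{\bar D(2+\ln(k+1))}{\frac{2\sqrt{2\bar D}}{L_f}(\sqrt{k+2}-1)} \;=\; \frac{L_f\sqrt{\tfrac{1}{2}\bar D}\,(2+\ln(k+1))}{2(\sqrt{k+2}-1)},
\end{equation*}
which is precisely \eqref{dynamic-bound}. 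The asymptotic form $O(L_f\sqrt{\bar D}\ln(k)/\sqrt{k})$ then follows by observing that $\sqrt{k+2}-1 = \Theta(\sqrt{k})$ and $2+\ln(k+1) = \Theta(\ln k)$.

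There is no real obstacle here; the only minor care point is choosing the correct integration limits so that the $\sqrt{k+2}-1$ factor comes out exactly as written in \eqref{dynamic-bound} (integrating from $1$ to $k+2$ rather than from $0$ to $k+1$), and similarly using the upper bound $1+\ln(k+1)$ rather than $\ln(k+1)$ for the harmonic sum so that the constant ``$2$'' appears in the numerator after adding the leading $\bar D$.
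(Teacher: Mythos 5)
Your proposal is correct and follows essentially the same route as the paper's proof: substitute the dynamic step-sizes into \eqref{mirror_bound2}, bound the resulting harmonic sum in the numerator above by $1+\ln(k+1)$ via an integral comparison, and bound $\sum_{j=1}^{k+1}1/\sqrt{j}$ below by $\int_1^{k+2}t^{-1/2}\,dt = 2\sqrt{k+2}-2$ in the denominator. The algebra combining the constants to produce $L_f\sqrt{\tfrac{1}{2}\bar D}$ is also handled correctly, so there is nothing to add.
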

\begin{proof} Substituting \eqref{dynamic-rule} in \eqref{mirror_bound2} and rearranging yields:
$$\min_{i \in \{0,\ldots,k\}}f(x^i) - p(\lambda^{k+1}) \ \ \leq \ \  \frac{L_f\sqrt{\frac{1}{2}\bar{D}}\left(1 + \sum_{i=0}^{k}\frac{1}{i+1} \right)}{\sum_{i=0}^{k}\frac{1}{\sqrt{i+1}}} \ . $$
The proof is completed by using the integral bounds $$1 + \sum_{i=0}^{k}\frac{1}{i+1} \le 2 + \int_1^{k+1} \frac{1}{t}dt = 2+\ln(k+1) \ , $$ and
$$\sum_{i=0}^{k}\frac{1}{\sqrt{i+1}} = \sum_{i=1}^{k+1}\frac{1}{\sqrt{i}} \ge \int_1^{k+2} \frac{1}{\sqrt{t}}dt = 2\sqrt{k+2} - 2 \ . $$
\end{proof}

Finally, consider the subgradient descent method (Algorithm \ref{subgraddescent}), which is Mirror Descent using $d(x)=\frac{1}{2}\|x\|_2^2$ in the case when the optimal value $f^\ast$ of \eqref{poi1} is known. Suppose that the step-sizes are given by $\alpha_k := \frac{f(x^k) - f^\ast}{\|g^k\|_2^2}$, then it is shown in Polyak \cite{polyak} that for any optimal solution $x^*$ of \eqref{poi1} it holds that:
\begin{equation}\label{subgrad_bound2}
\min_{i \in \{0,\ldots,k\}} f(x^i) - f^\ast \ \ \leq \ \  \frac{L_f\|x^0 - x^\ast\|_2}{\sqrt{k+1}} \ .
\end{equation}

\section{AdaBoost as Mirror Descent}\label{adaboostMirror}
We are given a set of base classifiers (also called weak hypotheses) $\mathcal{H} = \{h_1, \ldots, h_n\}$ where each $h_j : \mathcal{X} \to \{-1, 1\}$, and we are given training data (examples) $(x_1, y_1), \ldots, (x_m, y_m)$ where each $x_i \in \mathcal{X}$ ($\mathcal{X}$ is some measurement space) and each $y_i \in \{-1, +1\}$.\footnote{Actually our results also hold for the more general confidence-rated classification setting, where $h_j : \mathcal{X} \to [-1, 1]$ and $y_i \in [-1,1]$.}  We have access to a weak learner $\mathcal{W}(\cdot) : \Delta_m \to \{1, \ldots, n\}$ that, for any distribution $w$ on the examples ($w \in \Delta_m)$, returns an index $j^*$ of a base classifier $h_{j^*}$ in $\mathcal{H}$ that does best on the weighted example determined by $w$.  That is, the weak learner $\mathcal{W}(w)$ computes $j^* \in \argmax\limits_{j\in \{1, \ldots, n\}}\sum_{i = 1}^mw_iy_ih_j(x_i)$ and we write `` $j^* \in {\cal W}(w)$ '' in a slight abuse of notation.  Even though $n$ may be extremely large, we assume that it is easy to compute an index $j^* \in \mathcal{W}(w)$ for any $w \in \Delta_m$. Algorithm \ref{adaboost} is the algorithm AdaBoost, which constructs a sequence of distributions $\{w^k\}$ and a sequence $\{H_k\}$ of nonnegative combinations of base classifiers with the intent of designing a classifier $\text{sign}(H_k)$ that performs significantly better than any base classifier in $\mathcal{H}$.
\floatname{algorithm}{Algorithm}
\begin{algorithm}
\caption{AdaBoost}\label{adaboost}
\begin{algorithmic}
\STATE Initialize at $w^0 = (1/m, \ldots, 1/m), H_{0} = 0, k = 0$\medskip

At iteration $k$:
\STATE 1. Compute $j_k \in \mathcal{W}(w^k)$\medskip
\STATE 2. Choose $\alpha_k \geq 0$ and set:
\begin{description}
\item $H_{k+1} \gets H_{k} + \alpha_kh_{j_k}$
\item $w^{k+1}_i \gets w^k_i\exp(-\alpha_ky_ih_{j_k}(x_i)) \ \ i=1, \ldots, m$, and re-normalize $w^{k+1}$ so that $e^Tw^{k+1} = 1$
\end{description}
\end{algorithmic}
\end{algorithm}
\floatname{algorithm}{Method}

Notice that AdaBoost maintains a sequence of classifiers $\{H_k\}$ that are (nonnegative) linear combinations of base classifiers in $\mathcal{H}$.
Strictly speaking, a linear combination $H = \sum_{j=1}^n\lambda_jh_{j}$ of base classifiers in $\mathcal{H}$ is a function from $\mathcal{X}$ into the reals, and the classifier determined by $H$ is $\text{sign}(H)$; however, for simplicity we will refer to the linear combination $H$ as a classifier, and we say that the coefficient vector $\lambda \in \mathbb{R}^n$ determines the classifier $H$.

\subsection{AdaBoost is a Specific Case of Mirror Descent}\label{ada-subsect}
Here we show that AdaBoost corresponds to a particular instance of the Mirror Descent method (Algorithm \ref{standardprox}) applied to the particular primal problem of minimizing the {\em edge} in the space of ``primal'' variables $w \in \Delta_m$ which are distributions over the training data; and through duality, maximizing the {\em margin} in the space of ``dual'' variables of normalized classifiers represented by vectors $\lambda \in \mathbb{R}^n$ of coefficients which determine classifiers $\sum_{j=1 }^n\lambda_jh_j$.  We also show that the edge of $w^k$ is exactly the $\ell_\infty$ norm of the gradient of the log-exponential loss function.  Utilizing the computational complexity results for Mirror Descent (Theorem \ref{proxcomplexity}), we then establish  guarantees on the duality gap for these duality paired problems.  When the data are separable, these guarantees imply that the sequence of classifiers $\{H_k\}$ constructed in AdaBoost are in fact working on the problem of maximizing the \emph{margin}, with specific computational guarantees thereof for any step-size sequence $\{\alpha_k \}$.  When the data is not separable, these guarantees imply that the classifiers $\{H_k\}$ are in fact working on the problem of driving the $\ell_\infty$ norm of the gradient of the log-exponential loss function to zero, with specific computational guarantees thereof for any step-size sequence $\{\alpha_k \}$.  Let us see how this works out.\medskip

For convenience define the feature matrix $A \in \mathbb{R}^{m \times n}$ componentwise by $A_{ij} := y_i h_j(x_i)$, and let $A_j$ denote the $j$th column of $A$, and define $\phi(w,\lambda)=w^TA\lambda$ where we use $w$ instead of $x$ to represent the primal variable.  For any distribution $w \in \Delta_m$, $w^TA_j$ is the {\em edge} of classifier $h_j$ with respect to $w$, and \begin{equation}\label{poly}
f(w) :=  \max\limits_{j \in \{1, \ldots, n\}} \ w^TA_j = \max\limits_{\lambda \in \Delta_n} \ w^TA\lambda = \max\limits_{\lambda \in \Delta_n} \ \phi(w,\lambda)
\end{equation} is the maximum edge over all base classifiers, and we call $f(w)$ the edge with respect to $w$.  The optimization problem of minimizing the edge over all distributions $w$ is:
\begin{equation}\label{adapoi2}
\mbox{(Primal):} \ \ \ \min_{w \in \Delta_m} \ f(w) \ .
\end{equation}
Here \eqref{poly} and \eqref{adapoi2} are precisely in the format of \eqref{eff} and \eqref{poi1} with $P=\Delta_m$ and $Q=\Delta_n$.  We can construct the dual of the edge minimization problem following \eqref{ache} and \eqref{poi2}, whereby we see that the dual function is:
\begin{equation}\label{extra}
p(\lambda) :=  \min_{w \in \Delta_m} \ \phi(w,\lambda) =  \min_{w \in \Delta_m} \ w^TA\lambda = \min\limits_{i\in \{1, \ldots, m\}}(A\lambda)_i \ ,
\end{equation} and the dual problem is:
\begin{equation}\label{adapoi1}
\mbox{(Dual):} \ \ \ \max\limits_{\lambda \in \Delta_n} \ p(\lambda) \ .
\end{equation}
The {\em margin} achieved by the $\lambda$ on example $i$ is $(A\lambda)_i$, whereby $p(\lambda)$ is the least margin achieved by $\lambda$ over all examples, and is simply referred to as the margin of $\lambda$.  Because $p(\lambda)$ is positively homogeneous ($p(\beta \lambda) = \beta p(\lambda)$ for $\beta \ge 0$), it makes sense to normalize $\lambda$ when measuring the margin, which we do by re-scaling $\lambda$ so that $\lambda \in \Delta_n$.  Therefore the dual problem is that of maximizing the margin over all normalized nonnegative classifiers.  Note also that it is without loss of generality that we assume $\lambda \geq 0$ since for any base classifier $h_j \in \mathcal{H}$ we may add the classifier $-h_j$ to the set $\mathcal{H}$ if necessary.  Consider the classifier $H_k$ constructed in Step (2.) of AdaBoost.  It follows inductively that ${H}_k = \sum_{i = 0}^{k-1}\alpha_i h_{j_i}$, and we define the normalization of $H_k$ as:
\begin{equation}\label{normalized}
\bar{H}_k := \frac{H_k}{\sum_{i=0}^{k-1}\alpha_i} = \frac{\sum_{i = 0}^{k-1}\alpha_ih_{j_i}}{\sum_{i=0}^{k-1}\alpha_i}  \ .
\end{equation}
In addition to the margin function $p(\lambda)$, it will be useful to look at log-exponential loss function $L(\cdot) : \mathbb{R}^n \to \mathbb{R}$ which is defined as:
\begin{equation}\label{exp-loss}
L(\lambda) = \log\left(\frac{1}{m}\sum_{i = 1}^m\exp\left(-(A\lambda)_i\right)\right) \ .
\end{equation}  It is well-known that $L(\cdot)$ and $p(\cdot)$ are related by:  $-p(\lambda) - \ln(m) \le L(\lambda) \le -p(\lambda)$ for any $\lambda$.\medskip

We establish the following equivalence result.\medskip

\begin{theorem}\label{adaBoost-equiv}
The sequence of weight vectors $\{w^k\}$ in AdaBoost arise as the sequence of primal variables in Mirror Descent applied to the primal problem \eqref{adapoi2}, using the entropy prox function $d(w):=e(w)=\sum_{i=1}^m w_i\ln(w_i) + \ln(m)$, with step-size sequence $\{\alpha_k\}$ and initialized at $w^0 = (1/m, \ldots, 1/m)$.  Furthermore, the sequence of normalized classifiers $\{\bar{H}_k\}$ produced by AdaBoost arise as the sequence of dual variables $\{\lambda^k\}$ in Mirror Descent, and the margin of the classifier $\bar{H}_k$ is $p(\lambda^k)$.
\end{theorem}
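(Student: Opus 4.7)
The plan is to carry out a direct identification: instantiate the Mirror Descent method (Algorithm \ref{standardprox}) using $P=\Delta_m$, $Q=\Delta_n$, $\phi(w,\lambda)=w^TA\lambda$, the entropy prox function $d(w)=e(w)$, initial point $w^0 = (1/m,\ldots,1/m)$ (which is indeed the minimizer of $e(\cdot)$ on $\Delta_m$), and the AdaBoost step-size sequence $\{\alpha_k\}$; and then check line-by-line that the three pieces of Step (1.)--(2.) of Algorithm \ref{standardprox} reproduce exactly the corresponding updates in Algorithm \ref{adaboost}, while the dual averaging step yields the normalized classifier $\bar{H}_k$.

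First I would handle the subgradient computation. Since $\phi(w^k,\lambda)=(w^k)^TA\lambda$ is linear in $\lambda$ and $\Delta_n$ is the convex hull of the unit vectors, the inner maximum in Mirror Descent's Step (1.) is attained at some vertex $e_{j_k}$ with $j_k \in \argmax_{j}(w^k)^TA_j = \mathcal{W}(w^k)$. Hence one may take $\tilde\lambda^k = e_{j_k}$ where $j_k$ is the index returned by the weak learner, and consequently $g^k = \nabla_w\phi(w^k,\tilde\lambda^k) = A\tilde\lambda^k = A_{j_k}$, whose $i^{\text{th}}$ coordinate is $y_i h_{j_k}(x_i)$. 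This directly matches the output of Step (1.) of AdaBoost.

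Next I would verify the primal update. The key lemma is Example \ref{mwexample}: for the entropy prox function on $\Delta_m$, the prox subproblem $w^{k+1} \leftarrow \argmin_{w \in \Delta_m}\{\alpha_k(g^k)^Tw + D(w,w^k)\}$ reduces to the multiplicative weights rule $w^{k+1}_i \propto w^k_i\exp(-\alpha_k g^k_i)$. Plugging in $g^k_i = y_ih_{j_k}(x_i)$ gives precisely the reweighting $w^{k+1}_i \propto w^k_i\exp(-\alpha_ky_ih_{j_k}(x_i))$ from Step (2.) of AdaBoost, followed by the normalization $e^Tw^{k+1}=1$. An induction on $k$ (with base case $w^0=(1/m,\ldots,1/m)$) then establishes that the Mirror Descent primal iterates agree exactly with the AdaBoost weight vectors.

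Finally, for the dual side I would unfold Mirror Descent's averaging formula $\lambda^{k+1} = (\sum_{i=0}^k\alpha_i\tilde\lambda^i)/(\sum_{i=0}^k\alpha_i) = (\sum_{i=0}^k \alpha_i e_{j_i})/(\sum_{i=0}^k\alpha_i)$, so that the $j^{\text{th}}$ coordinate of $\lambda^{k+1}$ equals $(\sum_{i:\,j_i=j}\alpha_i)/(\sum_{i=0}^k\alpha_i)$, which is exactly the coefficient of $h_j$ in the normalized classifier $\bar H_{k+1}$ defined in \eqref{normalized}. Thus $\bar H_k = \sum_j \lambda^k_j h_j$, and then by definition of $p(\cdot)$ in \eqref{extra}, $p(\lambda^k) = \min_i (A\lambda^k)_i = \min_i y_i\bar H_k(x_i)$, which is precisely the margin of $\bar H_k$. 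There is no real obstacle here; the only thing that requires care is checking that the linear inner max is solved by a vertex (so that $\tilde\lambda^k$ may be identified with the weak learner index) and that the entropy prox update reproduces the exponential reweighting, both of which are routine given Example \ref{mwexample}.
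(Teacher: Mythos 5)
Your proposal is correct and follows essentially the same route as the paper's own proof: identifying the weak learner's output with a vertex solution $\tilde{\lambda}^k = e_{j_k}$ of the inner linear maximization over $\Delta_n$ (hence $g^k = A_{j_k}$), invoking Example \ref{mwexample} to match the entropy prox step with AdaBoost's exponential reweighting, and unfolding the dual averaging to recover $\bar{H}_k$ and its margin $p(\lambda^k)$. No gaps.
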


\begin{proof}
By definition of the weak learner and \eqref{poly} combined with \eqref{subdiff}, we have for any $w \in \Delta_m$
\begin{equation*}
j^\ast \in \mathcal{W}(w) \Longleftrightarrow j^\ast \in \argmax_{j\in \{1, \ldots, n\}} w^TA_j \Longleftrightarrow e_{j^\ast} \in \argmax_{\lambda \in \Delta_n}w^TA\lambda \Longleftrightarrow A_{j^\ast} \in \partial f(w) \ ,
\end{equation*}
whereby Step (1.) of AdaBoost is identifying a vector $g^k := A_{j_k} \in \partial f(w^k)$. Moreover, since $g^k_i = y_ih_{j_k}(x_i)= A_{i,j_k}$, the construction of $w^{k+1}$ in Step (2.) of AdaBoost is exactly setting $w^{k+1} \gets \arg\min\limits_{w \in \Delta_m}\left\{\alpha_k(g^k)^Tx + D(w, w^k)\right\}$ (where $D(\cdot, \cdot)$ is the Bregman distance function arising from the entropy function), as discussed in Example \ref{mwexample}.  Therefore the sequence $\{w^k\}$ is a sequence of primal variables in Mirror Descent with the entropy prox function.  Also notice from Step (1.) of AdaBoost and the output of the weak learner ${\cal W}(w^k)$ that $e_{j_k} \in \argmax\limits_{\lambda \in \Delta_n} \ (w^k)^TA\lambda$, which gives the correspondence $\tilde{\lambda}^k = e_{j_k}$ at Step (1.) of Mirror Descent.  Let $\{\lambda^k\}$ denote the corresponding sequence of dual variables defined in Step (2.) of Mirror Descent; it therefore follows that:
\begin{equation*}
\lambda^k :=  \frac{\sum_{i=0}^{k-1}\alpha_i \tilde{\lambda}^i}{\sum_{i=0}^{k-1}\alpha_i} = \frac{\sum_{i=0}^{k-1}\alpha_i e_{j_i}}{\sum_{i=0}^{k-1}\alpha_i} \ ,
\end{equation*}
whereby $\bar{H}_k$ defined in \eqref{normalized} is precisely the classifier determined by $\lambda^k$, and it follows that the margin of $\bar{H}_k$ is $p(\lambda^k)$.
\end{proof}

Let $\{\hat{\lambda}^k\}$ denote the sequence of coefficient vectors of the un-normalized classifiers $\{H_k\}$ produced by AdaBoost, where $\hat{\lambda}^k = \sum_{i=0}^{k-1}\alpha_i e_{j_i}$.  We also have the following relationship concerning the norm of the gradient of log-exponential loss function.\medskip

\begin{lemma}\label{edge-lemma}
For every iteration $k \geq 0$ of AdaBoost, the edge $f(w^k)$ and the un-normalized classifier $H_k$ with coefficient vector $\hat{\lambda}^k$ satisfy:
\begin{equation*}
f(w^k) = \|\nabla L(\hat{\lambda}^k)\|_\infty \ .
\end{equation*}
\end{lemma}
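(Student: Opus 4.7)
The plan is to show that $\nabla L(\hat\lambda^k) = -A^T w^k$ (as vectors in $\mathbb{R}^n$), and then use the fact (implicit in the paper's convention that $\mathcal{H}$ is closed under negation, so that the columns of $A$ come in $\pm$ pairs) that $\max_j w^T A_j = \|A^T w\|_\infty$ for any $w \in \Delta_m$. Given those two identities, the lemma follows immediately from the definition $f(w^k) = \max_j (w^k)^T A_j$.

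First I would compute $\nabla L$ directly from the definition. Differentiating
\begin{equation*}
L(\lambda) = \log\!\left(\tfrac{1}{m}\sum_{i=1}^m \exp(-(A\lambda)_i)\right)
\end{equation*}
with respect to $\lambda_j$ gives
\begin{equation*}
\frac{\partial L(\lambda)}{\partial \lambda_j} = -\,\sum_{i=1}^m \frac{\exp(-(A\lambda)_i)}{\sum_{l=1}^m \exp(-(A\lambda)_l)}\, A_{ij},
\end{equation*}
so $\nabla L(\lambda) = -A^T v(\lambda)$, where $v(\lambda) \in \Delta_m$ is the softmax vector with entries $v_i(\lambda) = \exp(-(A\lambda)_i)/\sum_l\exp(-(A\lambda)_l)$.

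Next I would identify $w^k$ with $v(\hat\lambda^k)$ by induction on $k$. The base case $k=0$ is immediate since $\hat\lambda^0 = 0$ gives $v(0) = (1/m,\ldots,1/m) = w^0$. For the inductive step, using the AdaBoost update together with $\hat\lambda^{k+1} = \hat\lambda^k + \alpha_k e_{j_k}$, one has
\begin{equation*}
w^{k+1}_i \propto w^k_i \exp(-\alpha_k A_{i,j_k}) \propto \exp(-(A\hat\lambda^k)_i)\exp(-\alpha_k A_{i,j_k}) = \exp(-(A\hat\lambda^{k+1})_i),
\end{equation*}
and after normalization $w^{k+1} = v(\hat\lambda^{k+1})$. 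Hence $\nabla L(\hat\lambda^k) = -A^T w^k$.

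Finally, to conclude, I would observe that for any $w \in \Delta_m$,
\begin{equation*}
\|\nabla L(\hat\lambda^k)\|_\infty = \|A^T w^k\|_\infty = \max_{j\in\{1,\ldots,n\}} |(w^k)^T A_j|,
\end{equation*}
and since the paper's convention (stated just above the theorem) is that $\mathcal{H}$ is closed under negation, the columns of $A$ appear in $\pm$ pairs, so the maximum absolute value equals $\max_j (w^k)^T A_j = f(w^k)$. The only substantive point is the inductive identification $w^k = v(\hat\lambda^k)$; the rest is bookkeeping about the softmax derivative and the sign convention.
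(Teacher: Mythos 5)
Your proposal is correct and follows essentially the same route as the paper's own proof: compute $\nabla L(\lambda) = -A^Tv(\lambda)$ with $v(\lambda)$ the softmax vector, identify $w^k = v(\hat\lambda^k)$ by induction using the multiplicative AdaBoost update, and invoke closure of $\mathcal{H}$ under negation to equate $\|A^Tw^k\|_\infty$ with $\max_j (w^k)^TA_j = f(w^k)$. No gaps.
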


\begin{proof}
By our assumption that the set of base classifiers $\mathcal{H}$ is closed under negation, we have for any $w$ that $f(w) = \max\limits_{\lambda \in \Delta_n}w^TA\lambda = \max\limits_{\lambda : \|\lambda\|_1 \leq 1}w^TA\lambda = \|A^Tw\|_\infty$. It remains to show that $-A^Tw^k = \nabla L(\hat{\lambda}^k)$. To do so, first note that
\begin{equation*}
\nabla L(\hat{\lambda}^k)_j = \frac{\sum_{i = 1}^m-A_{ij}\exp(-(A\hat{\lambda}^k)_i)}{\sum_{\ell = 1}^m\exp(-(A\hat{\lambda}^k)_\ell)} \ .
\end{equation*}
Thus, defining a vector $\hat{w}^k \in \Delta_m$ by
\begin{equation*}
\hat{w}^k_i := \frac{\exp(-(A\hat{\lambda}^k)_i)}{\sum_{\ell = 1}^m\exp(-(A\hat{\lambda}^k)_\ell)} \ ,
\end{equation*}
then we have that $\nabla L(\hat{\lambda}^k) = -A^T\hat{w}^k$. Clearly, we have $\hat{w}^0 = w^0$. By way of induction, supposing that $\hat{w}^{k} = w^k$, then by the update in step (2.) of AdaBoost we have that
\begin{align*}
w^{k+1}_i &\propto w^k_i\exp(-\alpha_kA_{ij_k})\\
&= \hat{w}^k_i\exp(-\alpha_kA_{ij_k}) \\
&\propto \exp(-(A\hat{\lambda}^k)_i - \alpha_kA_{ij_k})\\
&= \exp(-(A(\hat{\lambda}^k + \alpha_ke_{j_k}))_i)\\
&= \exp(-(A\hat{\lambda}^{k+1})_i) \ .
\end{align*}
Since both $w^{k+1}$ and $\hat{w}^{k+1}$ are normalized, we have that $w^{k+1} = \hat{w}^{k+1}$. Therefore, we have that $-A^Tw^k = \nabla L(\hat{\lambda}^k)$ for all $k \geq 0$, and in particular $\|A^Tw^k\|_\infty = \|\nabla L(\hat{\lambda}^k)\|_\infty$.
\end{proof}

The equivalences given by Theorem \ref{adaBoost-equiv} and Lemma \ref{edge-lemma} imply computational complexity results for AdaBoost for both the margin $p(\lambda)$ and the gradient of the log-exponential loss function, for a variety of step-size rules via Theorem \ref{proxcomplexity}, as follows.\medskip

\begin{theorem}\label{adaboost-complexity}{\bf (Complexity of AdaBoost)}
For all $k \geq 1$, the sequence of classifiers $\{H_k\}$, with coefficient vectors $\{\hat{\lambda}^k\}$, and their normalizations $\{\bar{H}_k\}$, with coefficient vectors $\{\lambda^k\}$, produced by AdaBoost satisfy:
\begin{equation}\label{ada-ineq1}
\min\limits_{i \in \{0,\ldots,k-1\}}\|\nabla L(\hat{\lambda}^i)\|_\infty - p(\lambda^k) \leq \frac{\ln(m) + \frac{1}{2}\sum_{i = 0}^{k-1}\alpha_i^2}{\sum_{i=0}^{k-1}\alpha_i} \ .
\end{equation}
If we decide a priori to run AdaBoost for $k \geq 1$ iterations and use a constant step-size $\alpha_i := \sqrt{\frac{2\ln(m)}{k}}$ for all $i =0, \ldots, k-1$, then:
\begin{equation}\label{ada-ineq2}
\min\limits_{i \in \{0,\ldots,k-1\}}\|\nabla L(\hat{\lambda}^i)\|_\infty - p(\lambda^k) \leq \sqrt{\frac{2\ln(m)}{k}} \ .
\end{equation}
If instead we use the dynamic step-size $\alpha_i := \sqrt{\frac{2\ln(m)}{i+1}}$, then:
\begin{equation}\label{ada-ineq3}
\min\limits_{i \in \{0,\ldots,k-1\}}\|\nabla L(\hat{\lambda}^i)\|_\infty - p(\lambda^k) \leq \frac{\sqrt{\frac{\ln(m)}{2}}\left[2 + \ln(k)\right]}{2(\sqrt{k + 1} - 1)} \ .
\end{equation}
\end{theorem}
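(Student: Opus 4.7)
The plan is to assemble the three tools already at hand: Theorem \ref{adaBoost-equiv} (AdaBoost equals Mirror Descent on the edge-minimization problem \eqref{adapoi2}), Lemma \ref{edge-lemma} (the edge equals the $\ell_\infty$-norm of the gradient of the log-exponential loss), and the Mirror Descent complexity results from Theorem \ref{proxcomplexity} and Propositions \ref{prop1}--\ref{prop2}. The only real work is to pin down the two constants entering the Mirror Descent bound and to reconcile the indexing conventions between the two algorithms.

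First I identify the constants. Example \ref{mwexample} tells me that the entropy prox function $e(\cdot)$ is $1$-strongly convex with respect to $\|\cdot\|_1$ on $\Delta_m$, so the relevant dual norm is $\|\cdot\|_\infty$. Since each entry $A_{ij}=y_ih_j(x_i) \in [-1,1]$, the subgradients $g^k = A_{j_k} \in \partial f(w^k)$ identified in the proof of Theorem \ref{adaBoost-equiv} satisfy $\|A_{j_k}\|_\infty \leq 1$, so I may take $L_f = 1$. For the prox-diameter, a short computation at $w^0=(1/m,\ldots,1/m)$ gives $e(w^0)=0$ and $\nabla e(w^0)^T(w-w^0)=0$ for all $w \in \Delta_m$ (since $\nabla e(w^0)$ is a constant vector and $w,w^0$ both lie in $\Delta_m$), so $D(w,w^0)=e(w)\in[0,\ln(m)]$ and I set $\bar D = \ln(m)$.

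Next I instantiate \eqref{mirror_bound2}. Running the Mirror Descent algorithm for $k$ iterations (indexed $0,\ldots,k-1$) produces $\lambda^k = \sum_{i=0}^{k-1}\alpha_i \tilde\lambda^i / \sum_{i=0}^{k-1}\alpha_i$, which by Theorem \ref{adaBoost-equiv} is exactly the coefficient vector of $\bar H_k$; in the notation of \eqref{mirror_bound2} this corresponds to replacing $k$ by $k-1$. Substituting $\bar D=\ln(m)$ and $L_f=1$ then gives
$$\min_{i \in \{0,\ldots,k-1\}} f(w^i) - p(\lambda^k) \leq \frac{\ln(m) + \tfrac{1}{2}\sum_{i=0}^{k-1}\alpha_i^2}{\sum_{i=0}^{k-1}\alpha_i},$$
and invoking Lemma \ref{edge-lemma} to rewrite $f(w^i) = \|\nabla L(\hat\lambda^i)\|_\infty$ produces \eqref{ada-ineq1}.

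Finally, \eqref{ada-ineq2} and \eqref{ada-ineq3} follow from Propositions \ref{prop1} and \ref{prop2} after performing the same substitutions and the same index shift. With $L_f=1$ and $\bar D=\ln(m)$, the constant rule \eqref{constant-rule} (applied to $k-1$ in place of $k$) becomes $\alpha_i = \sqrt{2\ln(m)/k}$ with bound $\sqrt{2\ln(m)/k}$, while the dynamic rule \eqref{dynamic-rule} becomes $\alpha_i = \sqrt{2\ln(m)/(i+1)}$ with bound $\sqrt{\ln(m)/2}(2+\ln(k))/(2(\sqrt{k+1}-1))$. These match \eqref{ada-ineq2} and \eqref{ada-ineq3} exactly. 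No step involves any real obstacle; the only thing to watch is the one-unit offset between Mirror Descent's indexing (where the averaged dual variable after completing iteration $k$ is $\lambda^{k+1}$) and AdaBoost's convention that $\lambda^k$ is the average produced after $k$ iterations.
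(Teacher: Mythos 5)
Your proposal is correct and follows essentially the same route as the paper: identify $L_f=1$ and $\bar D=\ln(m)$ (the paper delegates these two computations to appendix Lemmas \ref{lipschitz-f} and \ref{entropy-max}, whereas you verify them inline), apply Theorem \ref{proxcomplexity} and Propositions \ref{prop1}--\ref{prop2} with the index shift $k\mapsto k-1$, and convert $f(w^i)$ to $\|\nabla L(\hat\lambda^i)\|_\infty$ via Lemma \ref{edge-lemma}. Your explicit attention to the one-unit indexing offset is a point the paper's proof leaves implicit.
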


\begin{proof}
By weak duality and invoking Lemma \ref{edge-lemma} we have $p(\lambda^k) \leq \rho^* \leq \min\limits_{i \in \{0,\ldots,k-1\}}f(w^i)=\min\limits_{i \in \{0,\ldots,k-1\}} \|\nabla L(\hat{\lambda}^i)\|_\infty$. By Lemma \ref{lipschitz-f}, we have that $f(\cdot)$ has Lipschitz function values with Lipschitz constant $L_f = \|A\|_{1, \infty} = 1$, and by Lemma \ref{entropy-max}, we have that $\max\limits_{w \in \Delta_m}D(w, w^0) = \ln(m)$. Thus \eqref{ada-ineq1} follows directly from \eqref{mirror_bound2} in Theorem \ref{proxcomplexity}. The bounds \eqref{ada-ineq2} and \eqref{ada-ineq3} follow from \eqref{opt-alpha-bound} and \eqref{dynamic-bound}, respectively.
\end{proof}

Let us now discuss these results.  Let $\rho^* := \max\limits_{\lambda \in \Delta_n} p(\lambda)$ be the maximum margin over all normalized classifiers. Since we are assuming that the set of base classifiers $\mathcal{H}$ is closed under negation, it is always the case that $\rho^\ast \geq 0$.  When $\rho^\ast > 0$, there is a vector $\lambda^\ast \in \Delta_n$ with $A\lambda^\ast > 0$, and thus the classifier determined by $\lambda^\ast$ separates the data. In this separable case, it is both intuitively and theoretically desirable \cite{boostMargin97} to find a classifier with high margin, i.e., one that is close to the optimal value $\rho^\ast$.  For any $k \geq 1$, by weak duality, we have that $\rho^\ast \leq \min\limits_{i \in \{0,\ldots,k-1\}}\|\nabla L(\hat{\lambda}^i)\|_\infty$, whereby the bounds in \eqref{ada-ineq1}, \eqref{ada-ineq2}, and \eqref{ada-ineq3} hold for $\rho^* - p(\lambda^k)$, and thus provide exact computational guarantees that bound the optimality gap $\rho^\ast - p(\lambda^k)$ of the classifier $\bar{H}_k$ produced by AdaBoost.\medskip

When $\rho^\ast = 0$, then the data is not separable, and achieving the maximum margin is trivial; for example the classifier $\frac{1}{2}h_1 + \frac{1}{2}(-h_1)$ achieves the optimal margin.  In this case the log-exponential loss function $L(\cdot)$ is a metric of algorithm performance.  For any $k \geq 1$, by weak duality, we have that $0 = \rho^\ast \geq p(\lambda^k)$, whereby the bounds in \eqref{ada-ineq1}, \eqref{ada-ineq2}, and \eqref{ada-ineq3} hold for $\min\limits_{i \in \{0,\ldots,k-1\}}\|\nabla L(\hat{\lambda}^i)\|_\infty$ - 0 and hence provide exact computational complexity bounds for the $\ell_\infty$ norm of the gradient of $L(\cdot)$ thereby guaranteeing the extent to which the classifiers $H_k$ (equivalently $\hat\lambda^k$) produced by AdaBoost satisfy the first-order optimality condition for minimizing $L(\cdot)$.\medskip

\section{$\FSe$ as Subgradient Descent}\label{FSeSubgrad}
Here we consider the linear regression model $\by = \bX\beta + \be$, with given response vector $\by \in \mathbb{R}^n$, given model matrix $\bX \in \mathbb{R}^{n \times p}$, regression coefficients $\beta \in \mathbb{R}^p$ and errors $\be \in \mathbb{R}^n$.
In the high-dimensional statistical regime, especially with $p \gg n$, a sparse linear model with few non-zero coefficients is often desirable.  In this context, $\ell_1$-penalized regression, i.e., LASSO \cite{Tibshirani1996LASSO}, is often used to perform variable selection and shrinkage in the coefficients and is known to yield models with good predictive performance. The Incremental Forward Stagewise algorithm ($\FSe$)~\cite{HastieFSe, ESLBook} with shrinkage factor $\varepsilon$ is a type of boosting algorithm for the linear regression problem. $\FSe$ generates a coefficient profile\footnote{A coefficient profile is a path of coefficients $\{\beta(\alpha)\}_{\alpha \in \B{\alpha}}$ where $\alpha$ parameterizes the path. In the context of $\FSe$, $\alpha$ indexes the $\ell_1$ arc-length of the coefficients.} by repeatedly updating (by a small amount $\varepsilon$) the coefficient of the variable most correlated with the current residuals.  A complete description of $\FSe$ is presented in Algorithm \ref{forward}.\medskip

\floatname{algorithm}{Algorithm}
\begin{algorithm}
\caption{Incremental Forward Stagewise algorithm ($\FSe$)}\label{forward}
\begin{algorithmic}
\STATE Initialize at $r^0 = \by$, $\beta^0 = 0, k = 0$\medskip

At iteration $k$:
\STATE 1. Compute:
\begin{description}
\item $j_k \in \argmax\limits_{j \in \{1, \ldots, p\}} |(r^k)^T\bX_j|$
\end{description}
\STATE 2. Set:
\begin{description}
\item $r^{k+1} \gets r^k - \varepsilon \ \sgn((r^k)^T\bX_{j_k})\bX_{j_k}$
\item $\beta^{k+1}_{j_k} \gets \beta^{k}_{j_k} + \varepsilon\ \sgn((r^k)^T\bX_{j_k})$
\item $\beta^{k+1}_j \gets \beta^{k}_j \ , j \neq j_k$
\end{description}
\end{algorithmic}
\end{algorithm}
\floatname{algorithm}{Method}
As a consequence of the update scheme in Step (2.) of Algorithm \ref{forward}, $\FSe$ has the following sparsity property:
\begin{equation}\label{sparsity1}
\|\beta^k\|_1 \le k\varepsilon\;\;\;  \mathrm{and~~~} \|\beta^k\|_0 \le k \ .
\end{equation}
Different choices of $\varepsilon$ lead to different instances; for example a choice of $\varepsilon_k := | (r^k)^T\bX_{j_k} |$ yields the Forward Stagewise algorithm (FS)~\cite{ESLBook}, which is a greedy version of best-subset selection.\medskip

\subsection{$\FSe$ is a Specific Case of Subgradient Descent}
We now show that $\FSe$ is in fact an instance of the subgradient descent method (algorithm \ref{subgraddescent}) to minimize the largest correlation between the residuals and the predictors, over the space of residuals.  Indeed, consider the convex optimization problem:
\begin{equation}\label{FS-problem}
\min\limits_{r \in P_{\text{res}} } \;\;  f(r) := \|\bX^Tr\|_{\infty}
\end{equation}
where $P_{\text{res}}:= \{r \in \mathbb{R}^n : r = \by - \bX\beta \text{ for some } \beta \in \mathbb{R}^p\}$ is the the space of residuals.  One can also interpret the value of the objective function $f(r)$ in \eqref{FS-problem} as measuring the $\ell_\infty$ norm of the gradient of the least-squares loss function $L(\beta) := \frac{1}{2}\|\by - \bX\beta\|_2^2$ at some (possibly non-unique) point $\beta \in \mathbb{R}^p$. We establish the following equivalence.\medskip

\begin{theorem}\label{FSequiv}
The $\FSe$ algorithm is an instance of the subgradient descent method to solve problem (\ref{FS-problem}), initialized at $r^0 = \by$ and with a constant step-size of $\varepsilon$ at each iteration.
\end{theorem}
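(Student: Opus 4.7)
The plan is to match the $\FSe$ update rule with the subgradient descent update $r^{k+1} \gets \Pi_{P_{\text{res}}}(r^k - \alpha_k g^k)$ applied to problem \eqref{FS-problem}, with $\alpha_k \equiv \varepsilon$. This requires three things: (i) identifying the subgradient chosen at Step (1.) of $\FSe$ as a valid subgradient of $f(r) = \|\bX^T r\|_\infty$ at $r^k$; (ii) checking that the Euclidean projection step is trivial because the iterates remain in $P_{\text{res}}$; and (iii) verifying that the $\beta^k$ update of $\FSe$ is exactly bookkeeping consistent with the residual update.

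First, I would rewrite $f(r)$ in the minmax form used in Algorithm \ref{subgraddescent}. Since
\[
f(r) \ = \ \|\bX^T r\|_\infty \ = \ \max_{\lambda \in \mathbb{R}^p\,:\,\|\lambda\|_1 \le 1} \ \lambda^T \bX^T r \ = \ \max_{\lambda \in Q} \ \phi(r,\lambda),
\]
with $Q = \{\lambda : \|\lambda\|_1 \le 1\}$ and $\phi(r,\lambda) = (\bX\lambda)^T r$, Danskin's Theorem (equation \eqref{subdiff}) tells me that if $\tilde\lambda^k \in \argmax_{\lambda \in Q}\phi(r^k,\lambda)$, then $g^k := \nabla_r \phi(r^k, \tilde\lambda^k) = \bX \tilde\lambda^k$ is a subgradient of $f(\cdot)$ at $r^k$. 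The maximizers of $\lambda^T \bX^T r^k$ over the $\ell_1$ ball are the signed unit vectors $\tilde\lambda^k = \sgn((r^k)^T \bX_{j_k})\,e_{j_k}$ for any $j_k \in \argmax_j |(r^k)^T \bX_j|$, which is precisely the index selected in Step (1.) of $\FSe$. Consequently $g^k = \sgn((r^k)^T \bX_{j_k})\bX_{j_k}$.

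Next, I would observe that $P_{\text{res}} = \by + \{-\bX\beta : \beta \in \mathbb{R}^p\}$ is an affine subspace. By induction, suppose $r^k \in P_{\text{res}}$; then since $g^k$ is $\pm\bX_{j_k}$, the point $r^k - \varepsilon g^k$ is still of the form $\by - \bX\beta$ for some $\beta$, hence lies in $P_{\text{res}}$. Therefore the Euclidean projection acts as the identity, and the subgradient descent update with step-size $\varepsilon$ reduces to
\[
r^{k+1} \ = \ r^k - \varepsilon\,\sgn((r^k)^T \bX_{j_k})\,\bX_{j_k},
\]
which is exactly the residual update in Step (2.) of $\FSe$.

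Finally, I would verify the $\beta$ update by induction. If $r^k = \by - \bX\beta^k$, then substituting the residual update gives $r^{k+1} = \by - \bX(\beta^k + \varepsilon\,\sgn((r^k)^T \bX_{j_k})\,e_{j_k})$, matching $r^{k+1} = \by - \bX \beta^{k+1}$ with the $\FSe$ coefficient update. With $r^0 = \by = \by - \bX \cdot 0$, the base case holds and the two dynamics coincide for every $k$. The only subtle point — and the one I would be most careful about — is using the right characterization of the subdifferential of the $\ell_\infty$ norm and matching the sign convention so that $g^k$ picks out the $\FSe$ direction rather than its negative; everything else is an essentially mechanical identification.
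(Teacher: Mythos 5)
Your proposal is correct and follows essentially the same route as the paper's proof: expressing $f(r)=\max_{\beta\in B_1} r^T\bX\beta$, invoking \eqref{subdiff} to identify $g^k=\sgn((r^k)^T\bX_{j_k})\bX_{j_k}\in\partial f(r^k)$, and noting by induction that $r^k=\by-\bX\beta^k\in P_{\text{res}}$ so the projection is the identity. The extra care you take with the sign convention and the explicit $\beta$-update bookkeeping are fine elaborations of what the paper compresses into ``an easy induction.''
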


\begin{proof}
$f(r)$ measures the maximum (over all columns $j \in \{1, \ldots, p\}$) absolute value of the correlation between $\bX_j$ and $r$, and so $f(\cdot)$ has the following representation:
\begin{equation}\label{f_rep}
f(r) := \|\bX^Tr\|_{\infty} = \max_{j \in \{1, \ldots, p\}}|r^T\bX_j| = \max_{\beta \in B_1} \ r^T\bX\beta \ ,
\end{equation}
thus by \eqref{subdiff} for any $r \in \mathbb{R}^n$ we have:
\begin{equation}\label{subgradfact1}
j^* \in \argmax_{j \in \{1, \ldots, p\}}|r^T\bX_j| \Longleftrightarrow \sgn(r^T\bX_{j^*})\bX_{j^*} \in \partial f(r) \ .
\end{equation}
It therefore follows that Step (1.) of $\FSe$ is identifying a vector $g^k := \sgn((r^k)^T\bX_{j_k})\bX_{j_k} \in \partial f(r^k)$. Furthermore, Step (2.) of $\FSe$ is taking a subgradient step with step-size $\varepsilon$, namely $r^{k+1} := r^k - \varepsilon g^k$.  By an easy induction, the iterates of $\FSe$ satisfy $r^k = \by - \bX\beta^k \in P_{\text{res}}$ whereby $r^{k+1} := r^k - \varepsilon g^k = \Pi_{P_{\text{res}}}(r^k - \varepsilon g^k)$.
\end{proof}

As with the Mirror Descent interpretation of AdaBoost, we use the subgradient descent interpretation of $\FSe$ to obtain computational guarantees for a variety of step-size sequences.\medskip

\begin{theorem}\label{forward-complexity} {\bf (Complexity of $\FSe$)} Let $\beta_{LS} \in \arg\min_{\beta}\|\by - \bX\beta\|_2^2$ be any least-squares solution of the regression model.  With the constant shrinkage factor $\varepsilon$, for any $k \geq 0$ it holds that:
\begin{equation}\label{forward-complexity-eqn0}
\min_{i \in \{0,\ldots,k\}}\|\bX^Tr^i\|_\infty \leq \frac{\|\bX\beta_{LS}\|_2^2}{2\varepsilon(k+1)}+ \frac{\varepsilon\|\bX\|_{1,2}^2}{2} \ .
\end{equation}
If we a priori decide to run $\FSe$ for $k$ iterations and set $\varepsilon := \frac{\|\bX\beta_{LS}\|_2}{\|\bX\|_{1,2}\sqrt{k+1}}$ then
\begin{equation}\label{forward-complexity-eqn}
\min_{i \in \{0,\ldots,k\}}\|\bX^Tr^i\|_\infty \leq  \frac{\|\bX\|_{1,2}\|\bX\beta_{LS}\|_2}{\sqrt{k+1}} \ .
\end{equation}
If instead the shrinkage factor is dynamically chosen as $\varepsilon=\varepsilon_k:= \frac{|(r^k)^T\bX_{j_k}|}{\|\bX_{j_k}\|_2^2}$ (this is the Forward Stagewise algorithm (FS)~\cite{ESLBook}), then the bound (\ref{forward-complexity-eqn}) holds for all values of $k$ without having to set $k$ a priori.
\end{theorem}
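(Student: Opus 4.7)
The plan is to invoke the Mirror Descent complexity bound \eqref{mirror_bound1} with the Euclidean prox function, which we have already observed coincides with subgradient descent. By Theorem \ref{FSequiv}, $\FSe$ is exactly this instantiation on problem \eqref{FS-problem} with $P = P_{\text{res}}$, $r^0 = \by$, and constant step-size $\alpha_i = \varepsilon$. Since $P_{\text{res}}$ is unbounded we cannot use \eqref{mirror_bound2}; however, \eqref{mirror_bound1} is applicable for any fixed $r \in P_{\text{res}}$, and the natural choice is the residual at a least-squares solution, namely $r^\ast := \by - \bX\beta_{LS}$.

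First I would compute the two ingredients needed to apply \eqref{mirror_bound1}. For the Lipschitz constant of $f(r) = \|\bX^T r\|_\infty$ in the $\ell_2$ norm: using $|f(r) - f(r')| \leq \|\bX^T(r - r')\|_\infty \leq \max_j \|\bX_j\|_2\,\|r - r'\|_2 = \|\bX\|_{1,2}\|r-r'\|_2$, we obtain $L_f = \|\bX\|_{1,2}$. For the Bregman term with $d(r) = \tfrac{1}{2}\|r\|_2^2$, we have $D(r^\ast, r^0) = \tfrac{1}{2}\|r^\ast - \by\|_2^2 = \tfrac{1}{2}\|\bX\beta_{LS}\|_2^2$. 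Finally, the normal equations give $\bX^T r^\ast = 0$, so $f(r^\ast) = 0$.

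Substituting these three facts, together with $\alpha_i = \varepsilon$ for $i = 0, \ldots, k$, into \eqref{mirror_bound1} evaluated at $x = r^\ast$ yields
\begin{equation*}
\min_{i \in \{0,\ldots,k\}} f(r^i) \;\leq\; \frac{\tfrac{1}{2}\|\bX\beta_{LS}\|_2^2 + \tfrac{1}{2}\|\bX\|_{1,2}^2\,(k+1)\varepsilon^2}{(k+1)\varepsilon} \;=\; \frac{\|\bX\beta_{LS}\|_2^2}{2\varepsilon(k+1)} + \frac{\varepsilon\|\bX\|_{1,2}^2}{2} ,
\end{equation*}
which is \eqref{forward-complexity-eqn0}. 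The bound \eqref{forward-complexity-eqn} for the a priori fixed choice of $\varepsilon$ then follows by elementary minimization of $a/\varepsilon + b\varepsilon$ with $a = \|\bX\beta_{LS}\|_2^2/(2(k+1))$ and $b = \|\bX\|_{1,2}^2/2$, whose optimizer is $\varepsilon = \sqrt{a/b}$ and whose optimal value is $2\sqrt{ab}$; both simplify to the claimed expressions.

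For the final (dynamic) claim, the key observation is that the step-size $\varepsilon_k = |(r^k)^T\bX_{j_k}|/\|\bX_{j_k}\|_2^2$ coincides with the Polyak step-size $\alpha_k = (f(r^k) - f^\ast)/\|g^k\|_2^2$: indeed $f(r^k) = |(r^k)^T\bX_{j_k}|$ by the choice of $j_k$, $\|g^k\|_2 = \|\bX_{j_k}\|_2$ by \eqref{subgradfact1}, and $f^\ast = f(r^\ast) = 0$ as computed above. Thus \eqref{subgrad_bound2} applies with $x^\ast = r^\ast$, giving the bound $\|\bX\|_{1,2}\|\bX\beta_{LS}\|_2/\sqrt{k+1}$ for every $k$, with no need to fix $k$ in advance. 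The main conceptual point — and the only mild obstacle — is recognizing that although $P_{\text{res}}$ is unbounded, the presence of a concrete minimizer $r^\ast$ with $f(r^\ast) = 0$ and finite distance $\|\bX\beta_{LS}\|_2$ from $r^0 = \by$ lets us use the primal-only bound \eqref{mirror_bound1} (and the Polyak bound \eqref{subgrad_bound2}) in place of the duality-gap bound \eqref{mirror_bound2} used for AdaBoost.
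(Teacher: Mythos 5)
Your proposal is correct and follows essentially the same route as the paper's proof: apply the primal bound \eqref{mirror_bound1} at $r^\ast = \by - \bX\beta_{LS}$ with $f(r^\ast)=0$, $D(r^\ast,r^0)=\tfrac{1}{2}\|\bX\beta_{LS}\|_2^2$, and $L_f=\|\bX\|_{1,2}$, then identify the dynamic shrinkage with the Polyak step-size to invoke \eqref{subgrad_bound2}. The only cosmetic difference is that you verify the Lipschitz constant directly rather than citing Lemma \ref{lipschitz-f}.
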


\begin{proof} Let $r_{LS}:=\by -\bX\beta_{LS}$ be the residuals of the least-squares solution $\bX\beta_{LS}$, and it follows from orthogonality that $\bX^T r_{LS}=0$ if and only if $\beta_{LS}$ is a least-squares solution, hence the optimal objective function value of \eqref{FS-problem} is $f^*=f(r_{LS})=0$ and $r^*:=r_{LS}$ is an optimal solution of \eqref{FS-problem}.  As subgradient descent is simply Mirror Descent using the Euclidean prox function $d(r)=\frac{1}{2}r^Tr $ on the space of the residuals $r \in P_{\text{res}}$, we apply Theorem \ref{proxcomplexity} with $r=r^*=r_{LS}$.  We have:
\begin{equation*}
D(r^*, r^0) = D(r_{LS}, r^0) = \frac{1}{2}\|r_{LS} - r^0\|_2^2 = \frac{1}{2}\|r_{LS} - \by\|_2^2 = \frac{1}{2}\|\bX\beta_{LS}\|_2^2 \ .
\end{equation*}
By Lemma \ref{lipschitz-f}, $f(\cdot)$ has Lipschitz function values (with respect to the $\ell_2$ norm) with Lipschitz constant $L_f = \|\bX\|_{1,2} = \max\limits_{j \in \{1, \ldots, p\}}\|\bX_j\|_2$. Using these facts and $f(r_{LS}) = f^\ast = 0$, inequality \eqref{mirror_bound1} in Theorem \ref{proxcomplexity} implies \eqref{forward-complexity-eqn0}.  Setting $\varepsilon := \frac{\|\bX\beta_{LS}\|_2}{\|\bX\|_{1,2}\sqrt{k+1}}$ and substituting into \eqref{forward-complexity-eqn0} yields \eqref{forward-complexity-eqn}. Finally, the step-size $\varepsilon_k:= \frac{|(r^k)^T\bX_{j_k}|}{\|\bX_{j_k}\|_2^2}$ is just the step-size used to yield \eqref{subgrad_bound2}, and in this context $L_f = \|\bX\|_{1,2}$ and $\|r^0-r^*\|_2=\|\bX\beta_{LS}\|_2$ from which \eqref{forward-complexity-eqn} follows again.
\end{proof}

The computational complexity bounds in Theorem \ref{forward-complexity} are of a similar spirit to those implied by Theorem \ref{adaboost-complexity}, and can be interpreted as a guarantee on the ``closeness" of the coefficient vectors $\{\beta^k\}$ to satisfying the classical optimality condition $\|\bX^Tr\|_\infty = 0 $ for the (unconstrained) least-squares minimization problem.\medskip

Note that in the high-dimensional regime with $p > n$ and $\rank(\bX) = n$, we have that $\by = \bX\beta_{LS}$, thus the selection of $\varepsilon$ to obtain \eqref{forward-complexity-eqn} does not require knowing (or computing) $\beta_{LS}$. Furthermore, we can always bound $\|\bX\beta_{LS}\|_2 \leq \|\by\|_2$ and choose $\varepsilon$ optimally with respect to the resulting bound in \eqref{forward-complexity-eqn0}.  The interest in the interpretation given by Theorem \ref{FSequiv} and the consequent complexity results in Theorem \ref{forward-complexity} is due to the sparsity and regularization properties \eqref{sparsity1} combined with the computational complexity, in contrast to $\beta_{LS}$ which is not guaranteed to have any such sparsity or regularization properties.  Indeed, due to Theorem \ref{forward-complexity}, $\FSe$ now has the specific advantage of balancing the sparsity and regularization properties \eqref{sparsity1} and the complexity guarantees given by Theorem \ref{forward-complexity} through the choices of the shrinkage parameter $\varepsilon$ and the number of iterations $k$.\medskip

\appendix

\section{Appendix}\label{proofs}
\begin{lemma}\label{lipschitz-f}
Suppose that $f(\cdot) : P \to \mathbb{R}$ is defined by $f(x) := \max\limits_{\lambda \in Q} \ x^TA\lambda$. Then $f(\cdot)$ has Lipschitz function values with Lipschitz constant $L_f := \max\limits_{\lambda \in Q}\|A\lambda\|_\ast$. In particular, if $Q \subseteq B^\sharp := \{\lambda : \|\lambda\|_{\sharp} \leq 1\}$ for some norm $\|\cdot\|_\sharp$, then $L_f \leq \|A\|_{\sharp, \ast}$ where $\|A\|_{\sharp, \ast}$ is the operator norm of $A$.
\end{lemma}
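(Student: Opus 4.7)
The plan is to bound $|f(x) - f(y)|$ directly using the minmax definition of $f(\cdot)$ together with the generalized Cauchy-Schwarz inequality $z^T w \leq \|z\| \cdot \|w\|_\ast$. The argument is short and entirely standard; the only thing requiring a small moment of care is handling the asymmetry of the $\max$ in the definition of $f$, which I would address by a symmetrization / swap-of-roles at the end.

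First I would fix arbitrary $x, y \in P$ and select any $\lambda^x \in \argmax_{\lambda \in Q} x^T A \lambda$, which is well-defined since $Q$ is compact. Then by optimality, $f(x) = x^T A \lambda^x$ and by definition of the max, $f(y) \geq y^T A \lambda^x$. Subtracting gives
\begin{equation*}
f(x) - f(y) \;\leq\; (x - y)^T (A\lambda^x) \;\leq\; \|x - y\| \cdot \|A\lambda^x\|_\ast \;\leq\; \|x - y\| \cdot \max_{\lambda \in Q} \|A\lambda\|_\ast \;=\; L_f \, \|x-y\| \ ,
\end{equation*}
where the second inequality is the definition of the dual norm. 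Swapping the roles of $x$ and $y$ yields the matching lower bound, hence $|f(x) - f(y)| \leq L_f\|x-y\|$, establishing the Lipschitz claim with the stated constant.

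For the second part, assume $Q \subseteq B^\sharp$. Then the feasible region for the maximum defining $L_f$ is contained in the unit ball of $\|\cdot\|_\sharp$, so
\begin{equation*}
L_f \;=\; \max_{\lambda \in Q} \|A\lambda\|_\ast \;\leq\; \max_{\lambda : \|\lambda\|_\sharp \leq 1} \|A\lambda\|_\ast \;=\; \|A\|_{\sharp, \ast} \ ,
\end{equation*}
by the definition of the operator norm given in the Notation subsection. There is no genuine obstacle here; the only subtlety is ensuring the argmax exists (handled by compactness of $Q$) and recognizing that the Lipschitz bound holds in the given primal norm $\|\cdot\|$ while $\|A\lambda\|_\ast$ is measured in its dual. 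No choice of prox function or differentiability is needed.
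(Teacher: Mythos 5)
Your proposal is correct and follows essentially the same route as the paper's proof: pick a maximizer for one point, use suboptimality of that $\lambda$ for the other point to get a one-sided bound via the dual-norm (generalized Cauchy--Schwarz) inequality, and then symmetrize; the operator-norm bound follows by enlarging the feasible set of the max exactly as in the paper. No gaps.
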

\begin{proof}
Let $x, x^\prime \in P$ and let $\tilde{\lambda} \in \argmax\limits_{\lambda \in Q} \ x^TA\lambda \ , \ \tilde{\lambda}^\prime \in \argmax\limits_{\lambda \in Q} \ (x^\prime)^TA\lambda$. Then
\begin{align*}
f(x) - f(x^\prime) &= x^TA\tilde{\lambda} - (x^\prime)^TA\tilde{\lambda}^\prime\\
&\leq x^TA\tilde{\lambda} - (x^\prime)^TA\tilde{\lambda}\\
&= (x - x^\prime)^TA\tilde{\lambda}\\
&\leq \|A\tilde{\lambda}\|_\ast\|x - x^\prime\|\\
&\leq L_f\|x - x^\prime\| \ ,
\end{align*}
and symmetrically we have $f(x^\prime) - f(x) \leq L_f\|x^\prime - x\|$. Clearly if $Q \subseteq B^\sharp$, then
\begin{equation*}
L_f = \max\limits_{\lambda \in Q}\|A\lambda\|_\ast \leq \max\limits_{\lambda \in B^\sharp}\|A\lambda\|_\ast = \|A\|_{\sharp, \ast} \ .
\end{equation*}
\end{proof}

\begin{lemma}\label{entropy-max}
Let $e(\cdot) : \Delta_n \to \mathbb{R}$ be the entropy function, defined by $e(x) = \sum_{i = 1}^nx_i\ln(x_i) + \ln(n)$, with induced Bregman distance $D(\cdot, \cdot)$, and let $w^0 = (1/n, \ldots, 1/n)$. Then, we have $\max\limits_{w \in \Delta_n}D(w,w^0) = \ln(n)$.
\end{lemma}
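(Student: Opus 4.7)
The plan is to compute $D(w,w^0)$ explicitly and see that it simplifies dramatically because the linear term vanishes on the simplex, leaving essentially the entropy function itself, which is easy to bound.

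First I would unpack the Bregman definition $D(w,w^0) = e(w) - e(w^0) - \nabla e(w^0)^T(w-w^0)$. A direct calculation gives $e(w^0) = \sum_{i=1}^n \frac{1}{n}\ln\frac{1}{n} + \ln(n) = 0$, and the gradient has $i$-th component $\nabla e(w^0)_i = \ln(1/n) + 1 = 1 - \ln(n)$, which is the same constant for every coordinate. Since $\sum_i(w_i - 1/n) = 0$ for every $w \in \Delta_n$, the inner product $\nabla e(w^0)^T(w - w^0)$ vanishes identically on $\Delta_n$. Hence $D(w,w^0) = e(w)$ for all $w \in \Delta_n$.

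Next I would maximize $e(w) = \sum_{i=1}^n w_i \ln(w_i) + \ln(n)$ over $\Delta_n$. The function $t \mapsto t\ln t$ (with the convention $0\ln 0 = 0$) is convex, so $\sum_i w_i \ln w_i$ is a convex function on $\Delta_n$, and a convex function on a polytope attains its maximum at an extreme point. The extreme points of $\Delta_n$ are the unit vectors $e_j$, at which $\sum_i (e_j)_i \ln((e_j)_i) = 0$. Therefore $\max_{w \in \Delta_n} \sum_i w_i \ln w_i = 0$, and consequently
\[
\max_{w \in \Delta_n} D(w,w^0) \ = \ \max_{w \in \Delta_n} e(w) \ = \ 0 + \ln(n) \ = \ \ln(n) \ ,
\]
as desired.

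There is no real obstacle here; the main thing to watch is the convention $0\ln 0 = 0$ so that $e(\cdot)$ extends continuously to the boundary of $\Delta_n$ and the maximum is actually attained. Alternatively, one can bypass the extreme-point argument by noting that $\sum_i w_i \ln w_i$ is the negative of the Shannon entropy of the distribution $w$, which is nonpositive on $\Delta_n$ and equals zero precisely at the vertices, giving the same conclusion.
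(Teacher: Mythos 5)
Your proposal is correct and follows essentially the same route as the paper: both compute $e(w^0)=0$, observe that the gradient at $w^0$ is constant so the linear term vanishes on $\Delta_n$, reduce to $D(w,w^0)=e(w)$, and bound $\sum_i w_i\ln(w_i)\le 0$ with equality at a vertex. Your extreme-point justification of that last inequality is a slightly more elaborate version of the paper's direct observation, but the argument is the same.
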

\begin{proof}
Clearly $e(w^0) = \ln(1/n) + \ln(n) = 0$ and since $\nabla e(w^0)_i = 1 + \ln(1/n) = 1 - \ln(n)$, we have for any $w \in \Delta_n$:
\begin{equation*}
\nabla e(w^0)^T(w - w^0) = (1 - \ln(n))\sum_{i = 1}^n(w_i - 1/n) = (1 - \ln(n))(1 - 1) = 0 \ .
\end{equation*}
Thus we have:
\begin{equation*}
D(w,w^0) = e(w) - e(w^0) - \nabla e(w^0)^T(w - w^0) = e(w) = \sum_{i = 1}^nw_i\ln(w_i) + \ln(n) \leq \ln(n) \ .
\end{equation*}
Furthermore, the maximum is achieved by $e_1 = (1, 0, \ldots, 0)$.
\end{proof}

\newpage

\bibliographystyle{amsplain}
\bibliography{GF-papers-orc_student_paper}

\end{document}